\setlist[itemize,enumerate]{topsep=4pt,itemsep=0pt,leftmargin=*}
\DeclareMathOperator*{\softmax}{\mathsf{softmax}}
\DeclareMathOperator*{\sparsemax}{\mathsf{sparsemax}}
\DeclareMathOperator*{\entmax}{\mathsf{entmax}}
\definecolor{theoremcolor}{rgb}{0.94, 0.97, 1.0}
\definecolor{examplecolor}{rgb}{0.94, 0.97, 1.0}
\newcommand{\entalpha}{\gamma}
\newcommand{\invprob}{\mathsf{InvProb}}
\newcommand{\raps}{\mathsf{RAPS}}
\newcommand{\logmargin}{\mathsf{log\text{-}margin}}
\newcommand{\optentmax}{\mathsf{opt\text{-}entmax}}
\newmdtheoremenv[linewidth=0pt,innerleftmargin=4pt,innerrightmargin=4pt]{definition}{Definition}
\newmdtheoremenv[linewidth=0pt,innerleftmargin=4pt,innerrightmargin=4pt]{proposition}{Proposition}
\newmdtheoremenv[linewidth=0pt,innerleftmargin=0pt,innerrightmargin=0pt,backgroundcolor=examplecolor]{example}{Example}
\newmdtheoremenv{corollary}{Corollary}
\newmdtheoremenv{theorem}{Theorem}
\newmdtheoremenv{lemma}{Lemma}
\newcommand{\margarida}[1]
{{\color{teal}\textbf{[MC: #1]}}}
\newcommand{\andre}[1]
{{\color{red}\textbf{[AM: #1]}}}
\newcommand{\rebuttal}[1]
{{\color{black}{#1}}}
\newcommand{\sophia}[1]
{{\color{green!80!black}\textbf{[SS: #1]}}}
\begin{document}

%

%
\runningauthor{Margarida M. Campos, João Cálem, Sophia Sklaviadis, Mário A.T. Figueiredo, André F.T. Martins}
\twocolumn[

\aistatstitle{Sparse Activations as Conformal Predictors}

\aistatsauthor{ Margarida M. Campos$^{1,2}$ \And João Cálem$^{2}$ \And  Sophia Sklaviadis$^{1,2}$ \AND Mário A.T. Figueiredo$^{1,2,3}$ \And André F.T. Martins$^{1,2,3,4}$ }

\aistatsaddress{ $^1$ Instituto de Telecomunicações \\ $^2$ Instituto Superior Técnico, Universidade de Lisboa  \\ $^3$ ELLIS Unit Lisbon \\ $^4$ Unbabel \\ \texttt{margarida.campos@tecnico.ulisboa.pt}} ]

\begin{abstract}
Conformal prediction is a distribution-free framework for uncertainty quantification that replaces point predictions with sets, offering marginal coverage guarantees (\textit{i.e.}, ensuring that the prediction sets contain the true label with a specified probability, in expectation). 
In this paper, we uncover a novel connection between conformal prediction and sparse ``softmax-like'' transformations, such as sparsemax and $\entalpha$-entmax (with $\entalpha > 1$), which may assign nonzero probability only to a subset of labels. 
We introduce new non-conformity scores for classification that make the calibration process correspond to the widely used temperature scaling method. At test time, applying these sparse transformations with the calibrated temperature leads to a support set (\textit{i.e.}, the set of labels with nonzero probability) that automatically inherits the coverage guarantees of conformal prediction. 
Through experiments on computer vision and text classification benchmarks, we demonstrate that the proposed method achieves competitive results in terms of coverage, efficiency, and adaptiveness compared to standard non-conformity scores based on softmax.%
\end{abstract}

\begin{figure}
\centering
\includegraphics[width=1\columnwidth,trim = {2cm 0cm 3.5cm 0cm}]{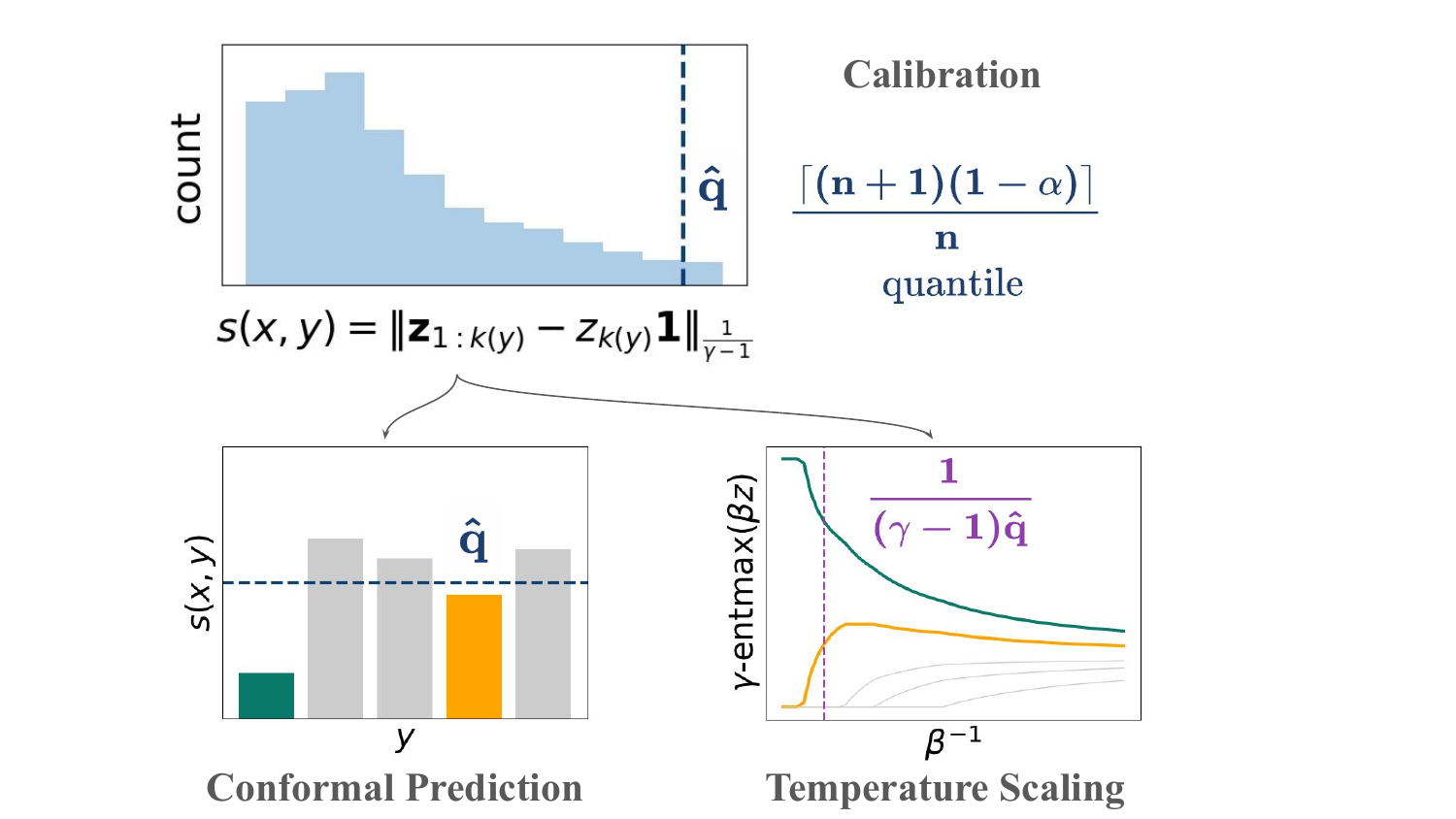}
\caption{\textbf{Conformal prediction meets temperature scaling:} we derive new non-conformity scores $s(x, y)$ that make conformal prediction equivalent to $\entalpha$-entmax temperature scaling.}
\label{fig:diagram}
\end{figure}
\section{INTRODUCTION}

The use of learned predictive models in many high-stakes applications (\textit{e.g.}, medical, legal, or financial) has stimulated extensive research on uncertainty quantification as a way to enhance predictions with reliable confidence estimates \citep{Silva_Filho,Gawlikowski}. In classification tasks, this corresponds to providing an estimate of the posterior probabilities of each of the classes, given the sample to be classified. In regression, this would correspond to providing, not only a point estimate, but also a confidence interval around that estimate.

Unfortunately, modern deep networks are not well calibrated: the class probability estimates may be significantly different from the posterior probability, and thus cannot be trusted as confidence levels \citep{guo2017calibrationmodernneuralnetworks,wenger2020}. This observation has motivated considerable work on developing methods to obtain well-calibrated class probability estimates.


To address this problem, 
\textbf{conformal prediction} (\S\ref{subsec:background_cp}) has emerged as a powerful framework for uncertainty quantification that comes with strong theoretical guarantees under minimal assumptions, as it is model-agnostic and distribution-free \citep{vovk_book,angelopoulos2022gentleintroductionconformalprediction}. Conformal predictors are \textit{set} predictors, \textit{i.e.}, their outputs are either sets of labels (for classification tasks) or intervals (for regression tasks). Most common variants, namely \textit{split} conformal prediction  \citep{papadopoulos2002inductive}, do not require model retraining: prediction rules are created using a separate calibration set through the design of a non-conformity score, which is a measure of how unlikely
an input-output pair is.  

A parallel research direction concerns \textbf{sparse} alternatives to the softmax activation  (\S\ref{subsec:entmax}), \textit{i.e.}, capable of producing probability outputs where some labels receive zero probability, yet are differentiable, thus usable in backpropagation \citep{martins2016softmaxsparsemaxsparsemodel,peters2019sparsesequencetosequencemodels}. The class of Fenchel-Young losses \citep{blondel2020learningfenchelyounglosses} provides a framework to learn such sparse predictors, generalizing the softmax activation and the cross-entropy loss. 
An added benefit of sparse probability outputs is that the labels with nonzero probability can be directly interpreted as a \textit{set prediction}, eliminating the need for creating these sets by thresholding probability values. Such set predictions have been used by \citet{martins2016softmaxsparsemaxsparsemodel} for multi-label classification, but not in the scope of uncertainty quantification. 

This paper connects the two lines of work above by \textbf{conformalizing sparse activations} (Figure~\ref{fig:diagram}; \S\ref{sec:proposed}). 
We focus on the $\entalpha$-entmax family%
\footnote{Called $\alpha$-entmax by \citet{peters2019sparsesequencetosequencemodels}. We choose $\entalpha$ in this paper to avoid clash with the confidence level $\alpha$, commonly used in the conformal prediction literature.} %
\citep{peters2019sparsesequencetosequencemodels}, which includes softmax ($\entalpha=1$) and sparsemax ($\entalpha=2$) as particular cases, and is sparse for $\entalpha > 1$. We create set predictions by including all labels receiving nonzero probability under those activations---these sets can be made larger or smaller by controlling a \textit{temperature} parameter. We leverage the calibration techniques of conformal prediction to choose the temperature that leads to the desired coverage level. 

Our main contributions are the following: 
\begin{itemize}
    \item A new non-conformity score that makes the conformal predictor equivalent to temperature scaling of the sparsemax activation, establishing a formal link between the two lines of work and ensuring statistical coverage guarantees (\S\ref{subsec:sparsemax_cp}). 
    \item Generalization of the construction above to the $\entalpha$-entmax family of activations, with the same guarantees (\S\ref{subsec:entmax_cp}). This yields a calibration strategy that can also be used with softmax ($\entalpha=1$) and where $\entalpha$ can be tuned as a hyperparameter (\textit{e.g.} to optimize prediction set size). 
    \item Empirically validation of our method on a range of computer vision and text classification tasks across several datasets, comparing the resulting coverage,  efficiency, and adaptiveness with commonly used non-conformity scores (\S\ref{sect:experiments}).
    \rebuttal{
    \item Expansion of the family of non-conformity scores for conformal prediction tasks with new scores that are competitive with state-of-the-art techniques.
    }
\end{itemize}

\rebuttal{
The code to reproduce all the reported experiments is publicly available.%
\footnote{\scriptsize{\url{https://github.com/deep-spin/sparse-activations-cp}}}
}

\section{BACKGROUND}
\label{sec:background}
This section provides background by briefly reviewing basic concepts of conformal prediction and sparse activations.

\subsection{Conformal Prediction}
\label{subsec:background_cp}


Consider a supervised learning task with input set $\mathcal{X}$, output set $\mathcal{Y}$, and a predictor, $f:\mathcal{X}\rightarrow \mathcal{Y}$. 
Given a new observation, $x_\text{test} \in \mathcal{X}$, a standard predictor produces a single point estimate: $\hat{y}_\text{test} = f(x_\text{test})$. A conformal predictor, $\mathcal{C}_\alpha: \mathcal{X}\rightarrow 2^{\mathcal{Y}}$, outputs instead a prediction set $\mathcal{C}_\alpha(x_\text{test}) \subseteq \mathcal{Y}$, which is expected to include the correct target, $y_\text{test}$, with a given probability $1-\alpha$, where $\alpha$ is a user-chosen error rate. 

\paragraph{Procedure} We focus on split conformal prediction, which does not involve model retraining \citep{papadopoulos2002inductive}. 
Given the predictor, $f$, and a collection of calibration samples, $\mathcal{D}_\text{cal} = \bigl((x_1,y_1),..., (x_n,y_n)\bigr)$, a conformal predictor is defined by a non-conformity score $s: \mathcal{X}\times\mathcal{Y}\rightarrow \mathbb{R}$. The non-conformity score measures how unlikely an input-output pair $(x,y) \in \mathcal{X}\times\mathcal{Y}$ is, compared to the remaining calibration data. Ideally, predictions $\hat{y}\in \mathcal{Y}$ yielding pairs $(x_\text{test}, \hat{y})$ that are likely to occur in the data should have a low non-conformity score, and should thus be included in the prediction set $\mathcal{C}_\alpha(x_\mathrm{test})$.

The procedure for generating the prediction set for an unobserved sample, $x_\text{test}$, is as follows: 
\begin{enumerate}
    \item During calibration, the non-conformity scores are computed for $\mathcal{D}_\mathrm{cal}$, $(s_1, ... , s_n)$, where $s_i = s(x_i,y_i)$. \item  A threshold $\hat{q}$ is set to the $\lceil(n+1)(1-\alpha)\rceil/n$ empirical quantile of these scores.
    \item At test time, the following prediction set is output:
    \begin{align}\label{eq:prediction_set}
        \mathcal{C}_\alpha(x_\text{test})=\{y\in\mathcal{Y}: s(x_\text{test},y)\leq \hat{q}\}.
    \end{align}
\end{enumerate}

\paragraph{Coverage guarantees} 
If the calibration data and the test datum $((X_1, Y_1), ..., (X_n, Y_n), ((X_{\text{test}}, Y_{\text{test}}))$ are \textit{exchangeable},\footnote{A sequence $(Z_1,...,Z_n)$ is said to be \textit{exchangeable} if $
		\bigl(Z_1,...,Z_n\bigr) \overset{d}{=} \bigl( Z_{\pi(1)},...,Z_{\pi(n)}\bigr)$
	for any permutations $\pi$ of $\{1,...,n\}$, where $\overset{d}{=}$ stands for \textit{identically distributed}.} conformal predictors obtained with this procedure yield prediction sets satisfying
\begin{equation}
	\mathbb{P}\big( Y_\text{test} \in \mathcal{C}_\alpha(X_\text{test})\big)\geq 1- \alpha,
	\label{eq:coverage}
\end{equation} 
as proved by \citet{vovk_book}, without any other assumptions on the predictive model or the data distribution. 
Ideally, we would like the predictor with the required coverage to be \textit{efficient}, \textit{i.e.}, to have a small average prediction set size $\mathbb{E} [|C_\alpha(X)|]$, and \textit{adaptive}, \textit{i.e.}, instances that are harder to predict should yield larger prediction sets, representing higher uncertainty. 

\subsection{Sparse Activations}
\label{subsec:entmax}

Consider a classification task with $K$ classes, $|\mathcal{Y}| = K$, and let $\triangle_K = \{ \bm{p} \in \mathbb{R}^K : \bm{p} \geq \mathbf{0}, \mathbf{1}^\top \bm{p} = 1 \}$ denote the $(K-1)$–dimensional probability simplex. Typically, predictive models output a vector of label scores, $\bm{z}\in \mathbb{R}^K$, which is converted to a probability vector in $\triangle_K$ through a suitable transformation, usually softmax:  
\begin{equation}\label{eq:softmax}
\softmax(\bm{z})_j := \frac{\exp(z_j)}{\sum_i \exp(z_i)}.
\end{equation}
Since the exponential function is stricly positive, the softmax transformation never outputs any zero. 
\citet{martins2016softmaxsparsemaxsparsemodel} introduced an alternative transformation,  \textit{sparsemax}, which can produce sparse outputs while being almost-everywhere differentiable:
\begin{equation}\label{eq:sparsemax}
\sparsemax(\bm{z}) := \arg\min_{\bm{p} \in \triangle_K} \| \bm{p} - \bm{z} \|^2.
\end{equation}
The solution of \eqref{eq:sparsemax}, which corresponds to the Euclidean projection of $\bm{z}$ onto $\triangle_K$, can be computed efficiently through Algorithm~\ref{alg:sparsemax}. 
This algorithm, which sorts the label scores and seeks a threshold which ensures normalization,  motivates our proposed non-conformity score to be presented in \S\ref{subsec:sparsemax_cp}. 

\begin{algorithm}[t]
\caption{Sparsemax evaluation.}
\label{alg:sparsemax}
\begin{algorithmic}[1]
\Require $\bm{z} \in \mathbb{R}^K$
\State Sort $z_{(1)} \ge z_{(2)} \ge ... \ge z_{(K)}$
\State Find $k(\bm{z}) = \max \left\{j \in [K] : 1 + jz_{(j)} > \sum_{k=1}^j z_{(k)} \right\}$. \label{alg:sparsemax_line_k}
\State Compute $\tau = \frac{\left(\sum_{k=1}^{k(\bm{z})} z_{(k)} \right) - 1}{k(\bm{z})}$. 
\State \Return $\mathsf{sparsemax}(\bm{z}) = (\bm{z} - \tau \bm{1})_+$. 
\end{algorithmic}
\end{algorithm}

As shown by \citet{peters2019sparsesequencetosequencemodels} and \citet{blondel2020learningfenchelyounglosses}, sparsemax and softmax are particular cases of a family of $\entalpha$-entmax transformations, defined as
\begin{align}\label{eq:entmax}
	\entalpha\text{-}\entmax(\bm{z}) := \arg\max_{\bm{p} \in \triangle_K} \bm{p}^\top \bm{z} + H_\entalpha(\bm{p}), 
\end{align}
where $H_\entalpha$ is a generalized entropy function \citep{tsallis1988}, defined, for $\entalpha > 0$, as
\begin{equation}
H_\entalpha(\bm{p}) = 
\begin{cases} 
	\frac{1}{\entalpha(\entalpha - 1)} \left( 1 - \sum_{j=1}^K p_j^\entalpha \right), & \entalpha \neq 1, \\
	-\sum_{j=1}^K p_j \log p_j, & \entalpha = 1. 
\end{cases}
\end{equation} 
This family of entropies is continuous with respect to $\entalpha$ and recovers the Shannon entropy for $\entalpha = 1$. 
Softmax and sparsemax are particular cases of the $\entalpha$-entmax transformation \eqref{eq:entmax} for $\entalpha=1$ and $\entalpha=2$, respectively. 
Crucially, setting $\gamma>1$ enables sparsity in the output, \textit{i.e.}, for certain inputs $\bm{z}$, the solution $\bm{p}^*$ in \eqref{eq:entmax} will be a sparse probability vector. Figure \ref{fig:entmax_mappings} depicts the behavior of $\entalpha$-entmax for three different values of $\entalpha$.

\begin{figure}[t]
\centering
\includegraphics[width=0.8\columnwidth]{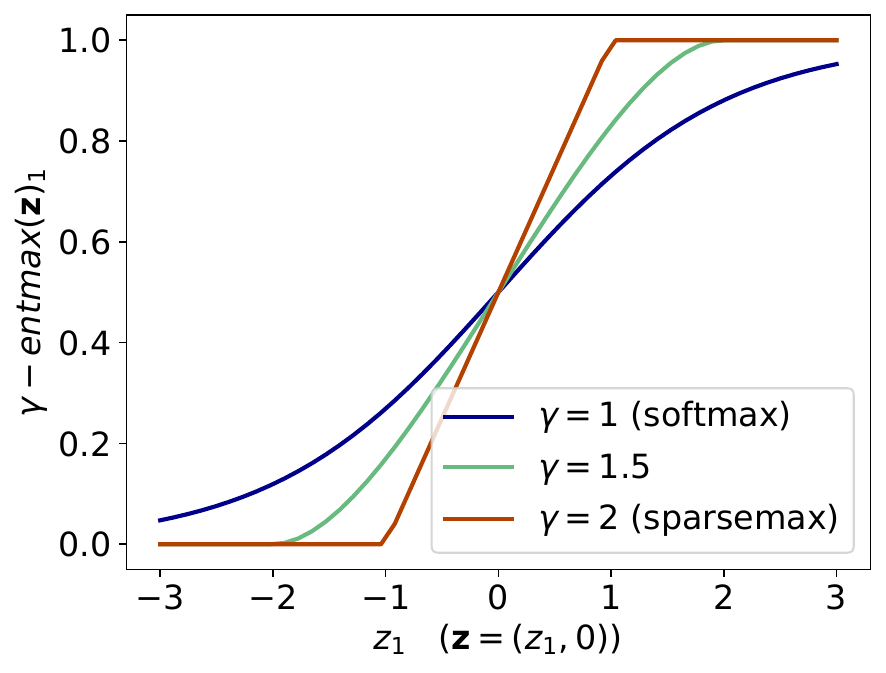}
	\caption{\label{fig:entmax_mappings} Illustration of entmax in the two-dimensional case $\entalpha$-entmax([$t$, 0])$_1$.}
\end{figure}

\citet{peters2019sparsesequencetosequencemodels} show that 
the solution of \eqref{eq:entmax} has the form 
\begin{align}\label{eq:entmax_def} 
	\entalpha\text{-}\entmax(\bm{z}) = \left[ (\entalpha - 1)\bm{z} - \tau \mathbf{1} \right]_{+}^{\frac{1}{\entalpha - 1}},
\end{align}
where $\mathbf{1}$ denotes a vector of ones,  $\left[ x \right]_+ = \max\{x, 0\}$, and $\tau$ is the (unique) constant ensuring normalization. 
A bisection algorithm for computing $\entalpha$-entmax for general $\entalpha$, as well as a more efficient algorithm for $\entalpha=1.5$,  have been proposed by \citet{peters2019sparsesequencetosequencemodels}. 
We make use of these algorithms in our experiments in \S\ref{sect:experiments}. 

\begin{figure*}
\centering
\includegraphics[width=0.9\textwidth]{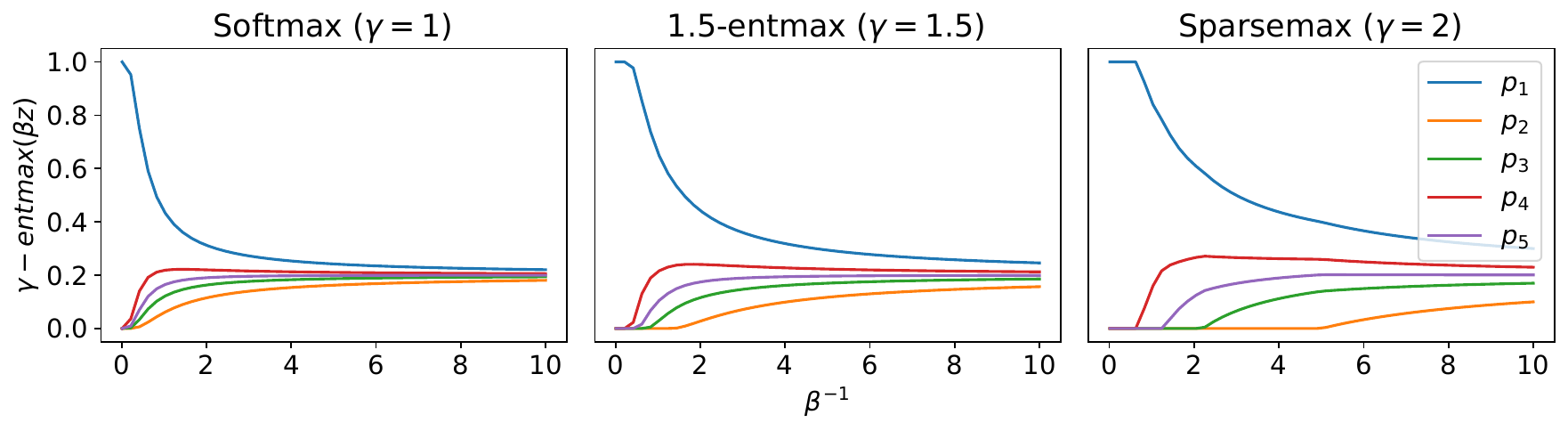}
\caption{Output of different $\entalpha\text{-}\mathsf{entmax}$ transformations on label scores $\bm{z} = [1, -1, -0.2, 0.4, -0.5]$, as a function of temperature parameter $\beta^{-1}$, where $[p_1,p_2,p_3,p_4,p_5] = \entalpha\text{-}\mathsf{entmax}(\beta\bm{z})$.}
\label{fig:entmax_temperature}
\end{figure*} 

\paragraph{Adjusting sparsity} Temperature scaling \citep{guo2017calibrationmodernneuralnetworks,platt1999probabilistic} is a simple and popular technique to calibrate the output probabilities from a predictive model. By multiplying the label scores $\bm{z}$ by a scaling factor $\beta > 0$, which can be interpreted as an inverse temperature parameter, the probabilities resulting from the subsequent softmax transformation can be made more or less peaked. 
When $\beta = 0$, $\softmax(\beta \bm{z})$ becomes a uniform distribution, whereas in the limit $\beta \rightarrow +\infty$ (the ``zero temperature limit''), $\softmax(\beta \bm{z})$ tends to a one-hot vector, assigning probability 1 to the label with the largest score.\footnote{If there are multiple labels tied with the same largest score, $\lim_{\beta\rightarrow +\infty}\softmax(\beta \bm{z})$ becomes a uniform distribution over those labels, assigning zero probability to all other labels.}  

Analogous limit cases apply also to $\entalpha\text{-}\entmax(\beta\bm{z})$, but there is a key qualitative difference \citep{martins2016softmaxsparsemaxsparsemodel,blondel2020learningfenchelyounglosses}. For any $\entalpha>1$, there is a \textit{finite} threshold $\beta^*$ (corresponding to a ``non-zero temperature'') above which $\entalpha\text{-}\entmax(\beta\bm{z})$ saturates as the zero temperature limit: $\beta \geq \beta^* \, \Rightarrow \, \entalpha\text{-}\entmax(\beta\bm{z}) = \lim_{\beta\rightarrow +\infty} \entalpha\text{-}\entmax(\beta\bm{z})$. Increasing $\beta$, which may be seen as a regularization coefficient, from $0$ to $\beta^*$ draws a regularization path where the support of $\entalpha$-$\entmax(\beta \bm{z})$ varies from the full set of labels to a single label (assuming no ties), as shown in Figure~\ref{fig:entmax_temperature}. 
Therefore, using temperature scaling with $\entalpha$-entmax, for $\entalpha > 1$, allows the sparsity of the output to be controlled. 
We exploit this fact in our proposed method described in \S\ref{sec:proposed}. 

\section{\MakeUppercase{Conformalizing Sparse Transformations}}
\label{sec:proposed}

We now present a connection between sparse transformations and conformal prediction, via the design of a non-conformity score for which the prediction sets given by conformal prediction are the same as the support of the sparsemax output (more generally, $\entalpha$-entmax with $\entalpha>1$)  where the calibration step corresponds to setting $\beta$ in temperature scaling. 


\paragraph{Sparsity as set prediction}
Considering that, for $\entalpha > 1$, the $\entalpha$-entmax transformations can produce sparse outputs and that the level of sparsity can be controlled through the coefficient $\beta$, the support of the output can be interpreted as a prediction set for a given input $\bm{x} \in \mathcal{X}$, where $\bm{z}= f(\bm{x}) \in \mathbb{R}^K$ is the vector of label scores produced by a trained model. For notational convenience, we assume throughout that the entries of $\bm{z}$ are sorted in descending order, $z_1 \ge z_2 \ge ... \ge z_K$.

\subsection{Conformalizing sparsemax}\label{subsec:sparsemax_cp}

Let us start with sparsemax. 
Taking a closer look at Algorithm~\ref{alg:sparsemax}, we start by noting that the inequality in line~\ref{alg:sparsemax_line_k} can be equivalently written as\footnote{\rebuttal{By noticing that $1+jz_j > \sum_{k=1}^{j} z_k \Leftrightarrow \left(\sum_{k=1}^{j-1} z_k\right) + z_j -z_j - (j-1)z_j < 1\Leftrightarrow \sum_{k=1}^{j-1} (z_{k} - z_{j}) < 1$.}} $\sum_{k=1}^{j-1} (z_{k} - z_{j}) < 1$. Let $S(\bm{z}) := \{j \in [K] : \mathsf{sparsemax}_j(\bm{z}) > 0\}$ denote the support of $\mathsf{sparsemax}(\bm{z})$, \textit{i.e.}, the labels that are assigned strictly positive probability. 
Scaling the label scores by the coefficient $\beta > 0$, we obtain a necessary and sufficient condition for the $j\textsuperscript{th}$ highest ranked label to be in the support $S(\beta \bm{z})$:
\begin{align}\label{eq:support_sparsemax}
    j \in S(\beta \bm{z}) \iff \sum_{k=1}^{j-1} (z_k - z_j) < \beta^{-1}.
\end{align}
This fact leads to  the following proposition.
\begin{proposition}\label{prop:sparsemax_conformal}
Let $C_\alpha: \mathcal{X}\rightarrow 2^\mathcal{Y}$ be a conformal predictor (as described in \S\ref{subsec:background_cp}). 
Define the following nonconformity score:
\begin{align}\label{eq:nonconf_score_sparsemax}
s(x,y) = \sum_{k=1}^{k(y)-1} (z_k - z_{k(y)}),
\end{align}
where $k(y)$ is the index of label $y$ in the sorted array $\bm{z}$, 
and let $\hat{q}$ be the $\lceil(n+1)(1-\alpha)\rceil/n$ empirical quantile of the set of calibration scores. 
Then, setting the sparsemax temperature as $\beta^{-1} := \hat{q}$ at test time leads to prediction sets $C_\alpha(x) = S(\beta\bm{z})$ achieving the desired $(1-\alpha)$ coverage in expectation. 
\end{proposition}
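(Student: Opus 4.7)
The plan is to reduce the proposition to two nearly disjoint tasks: an identity between two sets (the conformal prediction set and the sparsemax support) and the standard coverage guarantee from split conformal prediction. Once the set identity is in place, coverage is immediate because it depends only on $s$ being a legitimate non-conformity score (a measurable function of $(x,y)$) and on the exchangeability assumption already invoked in~\eqref{eq:coverage}.

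First, I would record the characterization of the sparsemax support that was derived just before the proposition: by~\eqref{eq:support_sparsemax}, for any $\beta>0$,
\begin{equation*}
j \in S(\beta \bm{z}) \iff \sum_{k=1}^{j-1}(z_k - z_j) < \beta^{-1}.
\end{equation*}
Combined with the definition of the non-conformity score in~\eqref{eq:nonconf_score_sparsemax}, this says exactly that $y \in S(\beta \bm{z}) \iff s(x,y) < \beta^{-1}$. On the conformal side, the split CP procedure declares $y \in C_\alpha(x) \iff s(x,y) \le \hat{q}$. Setting $\beta^{-1} = \hat{q}$ and comparing yields $C_\alpha(x) = S(\beta \bm{z})$ up to the boundary labels with $s(x,y) = \hat{q}$, on which the two set definitions differ only in a strict vs.\ non-strict inequality.

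Second, I would invoke the Vovk et al.\ coverage result cited in~\S\ref{subsec:background_cp}: for any non-conformity score applied to exchangeable data, the prediction set $\{y : s(x_{\text{test}},y) \le \hat{q}\}$ achieves marginal coverage at level $1-\alpha$. Since $s(x,y)$ defined in~\eqref{eq:nonconf_score_sparsemax} depends only on the trained model's output $\bm{z} = f(x)$ and the label $y$, it is a valid non-conformity score, so~\eqref{eq:coverage} applies directly to $C_\alpha$, and hence to $S(\beta \bm{z})$ by the identity established above.

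The main technical wrinkle is the boundary mismatch (strict $<$ for sparsemax support vs.\ non-strict $\le$ for the conformal set). I would address this by noting that since the stated coverage guarantee is a one-sided inequality and the conformal set with $\le$ is (weakly) larger than the sparsemax support, one either (i) observes that for generic continuous scores the event $s(x,y) = \hat{q}$ has probability zero, so the two sets coincide almost surely, or (ii) uses the standard tie-breaking convention to make the inequality strict without degrading coverage. Everything else in the proof is a direct rewriting of the sparsemax stopping condition from Algorithm~\ref{alg:sparsemax}.
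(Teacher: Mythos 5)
Your proof is correct and follows essentially the same route as the paper's, which simply cites the coverage guarantee \eqref{eq:coverage} together with the support condition \eqref{eq:support_sparsemax}. You additionally make explicit the strict-versus-non-strict boundary issue at $s(x,y)=\hat{q}$, which the paper's one-line proof leaves implicit; your resolution (the conformal set with $\le$ is weakly larger than the sparsemax support, so the one-sided coverage bound is preserved, and the two sets coincide except on the measure-zero tie event for continuous scores) is sound.
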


\begin{proof}
    This follows directly from the coverage guarantee of conformal prediction \eqref{eq:coverage} and condition \eqref{eq:support_sparsemax}. 
\end{proof}

The non-conformity score \eqref{eq:nonconf_score_sparsemax} has an intuitive interpretation: it sums the score differences between each of the top ranked labels and the label $y$. The lower the rank of $y$, the larger the number of terms in this sum  will be; conversely, the lower the score of $y$ is compared to the scores of the top labels, the larger the total sum will be. If this sum is larger than a threshold, the label will not be included in the prediction set. 

\subsection{Conformalizing $\gamma$-entmax} \label{subsec:entmax_cp}
We next turn to the more general case of $\entalpha$-entmax. 
Let $S(\beta \bm{z}; \entalpha) := \{j \in [K] : \entalpha\text{-entmax}_j(\beta\bm{z})>0\}$ denote the support of the distribution induced by the $\entalpha$-entmax transformation with  temperature $\beta^{-1}$ on the score vector $\bm{z}$. 
We start with the following result, which generalizes \eqref{eq:support_sparsemax}.

\begin{proposition}\label{prop:tau}
	The following condition characterizes the set of labels in the support $S(\beta\bm{z}; \entalpha)$: 
	\begin{equation}\label{eq:support_conditon}
j \in S(\beta \bm{z}; \entalpha) \iff \sum_{k=1}^{j-1} \big[(\entalpha-1)\beta (z_{k}-z_{j})\big]^{\frac{1}{\entalpha-1}} < 1.
	\end{equation}
\end{proposition}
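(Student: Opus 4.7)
The plan is to start from the closed-form expression for $\entalpha$-entmax given in \eqref{eq:entmax_def} and translate membership in the support into a condition on the normalization constant $\tau$. Applying \eqref{eq:entmax_def} to the scaled scores $\beta\bm{z}$ gives
\[
\entalpha\text{-}\entmax(\beta\bm{z})_j \;=\; \bigl[(\entalpha-1)\beta z_j - \tau\bigr]_+^{1/(\entalpha-1)},
\]
so $j\in S(\beta\bm{z};\entalpha)$ if and only if $\tau < (\entalpha-1)\beta z_j$. The goal is to re-express this inequality on $\tau$ as a statement involving only the score gaps $z_k-z_j$, which is what the proposition asks for.

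The key tool will be the auxiliary ``partition function''
\[
g(t) \;:=\; \sum_{k=1}^{K}\bigl[(\entalpha-1)\beta z_k - t\bigr]_+^{1/(\entalpha-1)},
\]
which is continuous and, on the region where it is positive, strictly decreasing in $t$. Because the normalization constraint $\mathbf{1}^\top\entalpha\text{-}\entmax(\beta\bm{z})=1$ is precisely $g(\tau)=1$, strict monotonicity immediately yields the equivalence
\[
\tau < (\entalpha-1)\beta z_j \iff g\bigl((\entalpha-1)\beta z_j\bigr) < 1.
\]
This is the pivotal step linking the abstract characterization of the support to an explicit computable quantity.

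The remaining step is a direct calculation: substitute $t = (\entalpha-1)\beta z_j$ into $g$ and exploit the sorting assumption $z_1\ge z_2\ge\cdots\ge z_K$. For $k\le j$ we have $z_k\ge z_j$, so the bracket $(\entalpha-1)\beta(z_k-z_j)$ is nonnegative and the positive part is redundant; for $k\ge j$ it is nonpositive and the positive part kills those terms (the $k=j$ term vanishes either way). Hence
\[
g\bigl((\entalpha-1)\beta z_j\bigr) \;=\; \sum_{k=1}^{j-1}\bigl[(\entalpha-1)\beta(z_k-z_j)\bigr]^{\frac{1}{\entalpha-1}},
\]
and combining with the equivalence above delivers \eqref{eq:support_conditon}.

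I do not anticipate a serious obstacle: once the reduction to the monotone function $g$ is in place, everything is algebraic bookkeeping. The only subtle point worth spelling out is justifying strict monotonicity of $g$ at $t=\tau$ (needed to turn the equivalence into a strict inequality), which follows because the normalizing $\tau$ must lie strictly below $(\entalpha-1)\beta z_1$—otherwise $g(\tau)=0\neq 1$—so at least one summand is strictly positive and differentiable in a neighborhood with negative derivative. The $\entalpha=2$ case recovers \eqref{eq:support_sparsemax} (with $1/(\entalpha-1)=1$), providing a useful sanity check.
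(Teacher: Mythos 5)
Your proof is correct and rests on the same essential mechanism as the paper's: reduce membership in the support to the inequality $\tau < (\entalpha-1)\beta z_j$ via the closed form \eqref{eq:entmax_def}, and then compare the sum $\sum_{k=1}^{j-1}\bigl[(\entalpha-1)\beta(z_k-z_j)\bigr]^{1/(\entalpha-1)}$ against the normalization identity that defines $\tau$. The only difference is organizational: the paper proves the two implications separately by term-by-term inequality chains against $\sum_{k\in S}[(\entalpha-1)\beta z_k-\tau]^{1/(\entalpha-1)}=1$, whereas you obtain both directions at once from the strict monotonicity of the partition function $g$, which is a slightly cleaner packaging of the same argument.
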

\begin{proof}
	The proof can be found in Appendix~\ref{sec:proof_tau}.
\end{proof}

Defining $\delta = 1/(\entalpha - 1)$ and letting $\|\bm{v}\|_\delta := (\sum_{k=1}^{K} |v_k|^\delta)^\frac{1}{\delta}$ be the $\delta$-norm in $\mathbb{R}^K$, we can equivalently express \eqref{eq:support_conditon} as 
\begin{equation} \label{eq:support_norm_eq}
    j \in S(\beta\bm{z}, \entalpha) \iff \|\bm{z}_{1:(j-1)} - z_{j}\bm{1}\|_\delta < \delta\beta^{-1}. 
\end{equation}
The sparsemax case \eqref{eq:support_sparsemax} is recovered by setting $\entalpha=2$ (equivalently, $\delta=1$) in  \eqref{eq:support_norm_eq}. 
We then have the following result, generalizing Proposition~\ref{prop:sparsemax_conformal}:

	
	

\begin{proposition}\label{prop:entmax_conformal}
Let $C_\alpha: \mathcal{X}\rightarrow 2^\mathcal{Y}$, be a conformal predictor (as described in  \S\ref{subsec:background_cp}). 
Define the following nonconformity score:
\begin{align}\label{eq:nonconf_score_entmax}
s(x,y) = \|\bm{z}_{1:k(y)}-z_{k(y)}\mathbf{1}\|_\delta,
\end{align}
where $\hat{q}$ is the $\lceil(n+1)(1-\alpha)\rceil/n$ empirical quantile of the set of calibration scores. 
Then, setting the $\entalpha$-entmax temperature as $\beta^{-1} := \delta^{-1}\hat{q}$ at test time leads to prediction sets $C_\alpha(x) = S(\beta\bm{z}; \entalpha)$ achieving the desired $(1-\alpha)$ coverage in expectation. 
\end{proposition}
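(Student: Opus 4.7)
My plan is to reduce the statement to two separate facts and glue them together: (i) the \emph{generic} coverage guarantee of split conformal prediction (equation~\eqref{eq:coverage}), which holds for \emph{any} non-conformity score under exchangeability, and (ii) a deterministic identity showing that, for the specific choice \eqref{eq:nonconf_score_entmax} and the specific temperature $\beta^{-1} = \delta^{-1}\hat q$, the conformal prediction set $\{y : s(x_{\text{test}},y) \le \hat q\}$ coincides with the entmax support $S(\beta\bm z; \entalpha)$. Once these two pieces are in place, substituting one set for the other inside the probability in \eqref{eq:coverage} gives exactly the claimed $(1-\alpha)$ coverage.

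For step (i), no work is needed beyond invoking \citet{vovk_book}: since the score $s$ defined in \eqref{eq:nonconf_score_entmax} is a measurable function of $(x,y)$ through the model output $\bm{z}=f(x)$, exchangeability of the calibration and test data transfers to exchangeability of their scores, and the standard quantile argument yields $\mathbb{P}(s(X_{\text{test}}, Y_{\text{test}}) \le \hat q) \ge 1-\alpha$. For step (ii), I would first simplify the non-conformity score by observing that the $k(y)$-th coordinate of $\bm{z}_{1:k(y)} - z_{k(y)}\bm{1}$ vanishes, so
\begin{equation*}
s(x,y) \;=\; \bigl\|\bm{z}_{1:k(y)} - z_{k(y)}\bm{1}\bigr\|_\delta \;=\; \bigl\|\bm{z}_{1:k(y)-1} - z_{k(y)}\bm{1}\bigr\|_\delta.
\end{equation*}
Then I would invoke Proposition~\ref{prop:tau}, rewritten in the norm form \eqref{eq:support_norm_eq}: the label of rank $j$ lies in $S(\beta\bm{z};\entalpha)$ if and only if $\|\bm{z}_{1:j-1} - z_j\bm{1}\|_\delta < \delta\beta^{-1}$. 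Plugging in $\delta\beta^{-1} = \hat q$ and identifying $j = k(y)$ shows that $y \in S(\beta\bm z;\entalpha) \iff s(x,y) < \hat q$, matching (up to the boundary) the definition $\{y : s(x,y)\le \hat q\}$ of $C_\alpha(x)$.

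The only genuine subtlety — and what I expect to be the main obstacle — is the mismatch between the strict inequality in \eqref{eq:support_norm_eq} and the non-strict inequality \eqref{eq:prediction_set} used to define the conformal set. In the generic case (no ties among calibration scores or between a calibration score and the test score) this boundary set has probability zero and the two sets coincide almost surely, so coverage transfers directly. In pathological cases one has $C_\alpha(x) \supseteq S(\beta\bm z;\entalpha)$, which if anything can only \emph{help} coverage; I would therefore state the equivalence modulo ties and note that the lower-bound coverage guarantee is preserved either way. The rest of the argument is bookkeeping: confirm that sorting $\bm{z}$ in descending order makes $k(y)$ well-defined for every label, and that the substitution $\beta^{-1} = \delta^{-1}\hat q$ specialises to $\beta^{-1} = \hat q$ when $\entalpha = 2$ (i.e.\ $\delta = 1$), recovering Proposition~\ref{prop:sparsemax_conformal} as a sanity check.
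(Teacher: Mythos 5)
Your proposal is correct and follows essentially the same route as the paper, whose proof is a one-line appeal to the generic coverage guarantee \eqref{eq:coverage} combined with the support characterization \eqref{eq:support_norm_eq}; you simply make the two ingredients and their gluing explicit. Your additional remark on the strict-versus-non-strict inequality at the boundary $s(x,y)=\hat q$ is a genuine subtlety the paper silently glosses over (and worth keeping, since the support set is the \emph{smaller} of the two sets, so the tie-free or continuous-score assumption is what actually lets the coverage bound transfer to $S(\beta\bm z;\entalpha)$).
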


\begin{proof}
    This again follows directly from the conformal prediction coverage guarantee \eqref{eq:coverage} and condition \eqref{eq:support_norm_eq}. 
\end{proof}

Proposition \ref{prop:entmax_conformal} states that, through an appropriate choice of non-conformity score, a conformal predictor generates prediction sets corresponding to the support of the $\entalpha$-entmax transformation with temperature $\beta^{-1} = \delta^{-1}\hat{q}$, \rebuttal{as illustrated in Figure \ref{fig:diagram}}. 
Intuitively, the nonconformity score accumulates the differences between the largest label scores and the score of the correct label, returning the $\delta$-norm of the vector of differences. For example, when $\gamma=1.5$ (1.5-entmax), we have $\delta=2$, which corresponds to the Euclidean norm. The non-conformity score \eqref{eq:nonconf_score_entmax} is always non-negative and it is zero if $k(y)=1$. 

\paragraph{Log-margin} 
Interestingly, the non-conformity score \eqref{eq:nonconf_score_entmax} also works for softmax ($\entalpha=1$, \textit{i.e.}, $\delta = +\infty$):
\begin{align}\label{eq:nonconf_score_softmax_new}
    s(x,y) &= \|\bm{z}_{1:k(y)}-z_{k(y)}\mathbf{1}\|_\infty = z_1 - z_{k(y)} \nonumber\\
    &= \log \frac{p_1}{p_{k(y)}},
\end{align}
which is the log-odds ratio between the most probable class and the true one.
Since the $\log$ function is monotonic, calibration of this non-conformity score leads to thresholding the odds ratio $p_1/ p_{k(y)}$, which has an intuitive interpretation: labels whose probability is above a fraction of that of the most probable label are included in the prediction set.\footnote{This non-conformity score, although reminiscent of the previously used \textit{margin score} \citep{johansson_marginscores,linusson_margin}, uses the margin (distance to the maximum) on model output, not on the probability estimates.} However, the interpretation as temperature calibration no longer applies, since softmax is not sparse. We also experiment with the non-conformity score \eqref{eq:nonconf_score_softmax_new} in \S\ref{sect:experiments}. 

\section{\MakeUppercase{Experiments}}
\label{sect:experiments}

To assess the performance of the non-conformity scores introduced in \S\ref{sec:proposed} we report experiments on several classification tasks, comparing the proposed strategies with standard conformal prediction  over several dimensions: coverage, efficiency (prediction set size), and adaptiveness, at several different confidence levels.  

\subsection{Experimental Setup}
 \paragraph{Datasets} We evaluate all approaches on tasks of varying difficulty: image classification on the CIFAR10, CIFAR100, and ImageNet datasets \citep{cifars,imagenet} and text classification on the 20 Newsgroups dataset \citep{twenty_newsgroups_113}. For each dataset, the original test data is split into calibration ($40\%$) and test sets ($60\%$) and the results reported are averaged over 5 random splits.

\paragraph{Models} For the CIFAR100 and ImageNet datasets, we finetune the \textit{vision transformer} (ViT) model \citep{dosovitskiy2021imageworth16x16words}, obtaining average accuracies of approximately $0.86$ and $0.81$ on the test set, respectively. As for the 20 Newsgroups dataset, we finetune a BERT base model \citep{Devlin2019BERTPO} for sequence classification, obtaining an average test accuracy of $0.74$. Given the simplicity of the task on the CIFAR10 dataset, we train a convolutional neural network from scratch with final average test accuracy of $0.84$. More model evaluation details can be found in Appendix \ref{subsec:modelling}.

\begin{table*}[t]
\centering
\caption{Empirical coverage of different conformal procedures on the test set (averaged over 5 splits).} 
\smallskip
\begin{tabular}{l|l|cccccc}
\toprule
$\mathbf{\alpha}$ & \textbf{Dataset} & $\mathsf{RAPS}$ & $1.5$-$\mathsf{entmax}$ & $\mathsf{log\text{-}margin}$ & $\mathsf{opt\text{-}entmax}$ & $\mathsf{InvProb}$ & $\mathsf{sparsemax}$ \\
\midrule
\multirow{4}{*}{0.01} & CIFAR10    & 0.990 & 0.990 & 0.990 & 0.989 & 0.990 & 0.989 \\
                      & CIFAR100   & 0.991 & 0.991 & 0.991 & 0.991 & 0.990 & 0.991 \\
                      & ImageNet   & 0.990 & 0.990 & 0.990 & 0.990 & 0.990 & 0.990 \\
                      & NewsGroups & 0.989 & 0.988 & 0.989 & 0.990 & 0.989 & 0.989 \\
\midrule
\multirow{4}{*}{0.10} & CIFAR10    & 0.899 & 0.900 & 0.901 & 0.900 & 0.900 & 0.900 \\
                      & CIFAR100   & 0.897 & 0.899 & 0.900 & 0.899 & 0.899 & 0.900 \\
                      & ImageNet   & 0.899 & 0.899 & 0.899 & 0.899 & 0.899 & 0.900 \\
                      & NewsGroups & 0.895 & 0.900 & 0.902 & 0.900 & 0.901 & 0.900 \\
\bottomrule
\end{tabular}
\label{tab:coverage}
\end{table*}

\paragraph{Conformal procedures} We experiment with two commonly used conformal procedures, both involving the softmax transformation:
\begin{itemize}
    \item $\mathsf{InvProb}$: this is the standard conformal prediction (\S \ref{subsec:background_cp}) with non-conformity score $s(x,y) = 1 - \hat{p}(y|x)$, where $\hat{p}(y|x)$ is the softmax output corresponding to class $y$;
    \item $\mathsf{RAPS}$: this is a modification of the standard procedure called \textit{regularized adaptive prediction sets} \citep{angelopoulos2022uncertaintysetsimageclassifiers}, which makes use of two regularization parameters to improve the efficiency of the \textit{adaptive predictive sets} (APS) predictors \citep{romano2020classificationvalidadaptivecoverage}. In prior work, this has shown good efficiency and adaptiveness properties. \rebuttal{Implementation of the $\mathsf{RAPS}$ procedure is done following the original paper\footnote{Since using the same set to both calibrate the conformal predictor and find the optimal regularization hyperparameters would violate the exchangeability assumption, we split the original calibration data into two subsets for this procedure.}} (additional details can be found in Appendix \ref{subsec:raps}).
\end{itemize}
We compare these approaches with the following proposed ones: 
\begin{itemize}
    \item $\entalpha\text{-}\mathsf{entmax}$: this applies conformal prediction with the non-conformity score (\ref{eq:nonconf_score_entmax}) as described in Proposition \ref{prop:entmax_conformal}, with $1<\gamma\leq 2$, which has a temperature scaling interpretation. When $\gamma=2$, we call the procedure $\mathsf{sparsemax}$, corresponding to non-conformity score (\ref{eq:nonconf_score_sparsemax});
    \item $\mathsf{log\text{-}margin}$: this uses the score defined in (\ref{eq:nonconf_score_softmax_new}), which corresponds to $\gamma=1$ (softmax), also a particular case of the non-conformity score (\ref{eq:nonconf_score_entmax}), but without an interpretation in terms of temperature scaling;
    \item $\mathsf{opt\text{-}entmax}$: a procedure where $\gamma$ is treated as a hyperparameter ($1<\gamma<2$), tuned to minimize the average prediction set size. \rebuttal{This is done by splitting the original calibration set in two: one for conformal prediction calibration and the second for the choice of optimal $\gamma$. Additional setup details can be found in Appendix \ref{subsec:opt_entmax}.}
\end{itemize}

\begin{figure*}[t]
\centering
\includegraphics[width=\textwidth]{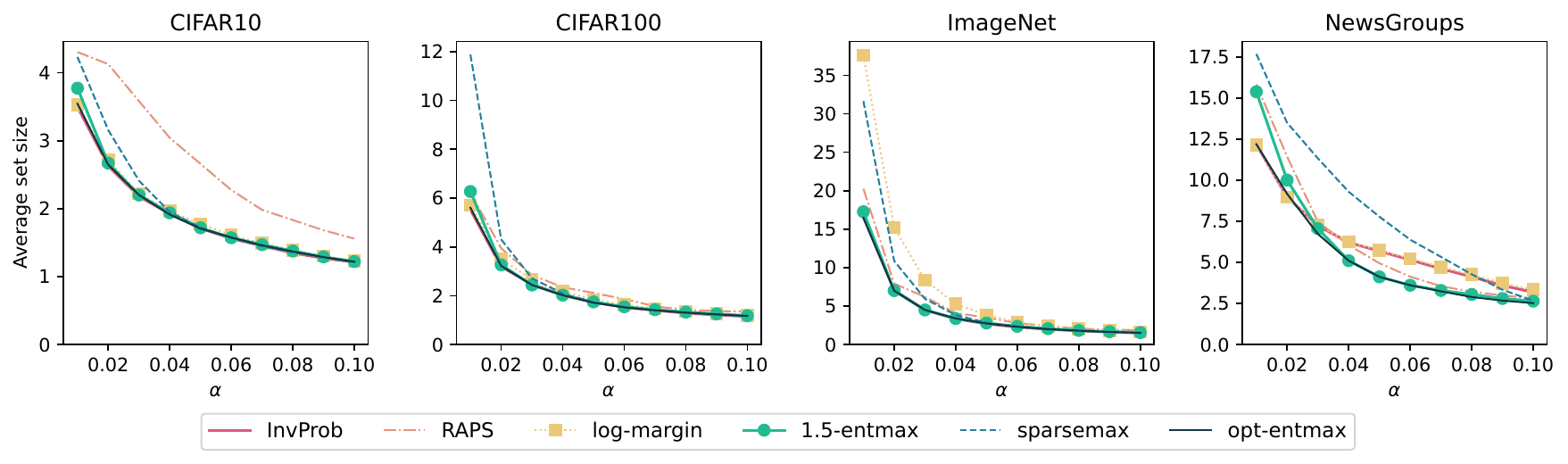}
\caption{Average prediction set size as a function of significance level $\alpha$. \label{fig:avg_set_size_all}}
\end{figure*}

\subsection{Coverage and Efficiency}
Table \ref{tab:coverage} shows the average empirical coverage obtained by the different methods on the test set for two confidence levels $1-\alpha$, with $\alpha \in \{0.01, 0.1\}$. 
We observe that all methods achieve a marginal coverage close to the theoretical bound of $1-\alpha$, as expected (see Appendix \ref{subsec:extra_coverage} for coverage analysis for more values of $\alpha$).

In order to evaluate the efficiency of the predictors, we use two common metrics: the \textit{average set size} and the \textit{singleton ratio} (fraction of prediction sets containing a single element). 
We vary $\alpha \in [0.01, 0.1]$ to study different confidence levels. 

\paragraph{Average set size} As can be seen in Figure \ref{fig:avg_set_size_all}, $\optentmax$ and $\mathsf{InvProb}$ are the most efficient set predictors across almost all tasks and confidence levels (with $\optentmax$ superior to $\mathsf{InvProb}$ in NewsGroups), both dominating $\mathsf{RAPS}$. The 1.5-$\entmax$ is in general very competitive, and so is the $\mathsf{log\text{-}margin}$ predictor (except for ImageNet). The $\sparsemax$ predictor  generally yields larger sets on average, especially for high confidence levels. 

In $\entalpha\text{-}\entmax$, $\gamma$ can be seen as an additional hyperparameter, since  for each $\alpha$, it can be tuned on the calibration set for a given metric, 
as $\optentmax$ does for average set size. An example of the behavior of $\entalpha\text{-}\entmax$ with varying $\gamma$ is shown in Figure \ref{fig:gamma_entmax}.

\begin{figure}[t]
    \centering
    \includegraphics[width=0.9\linewidth]{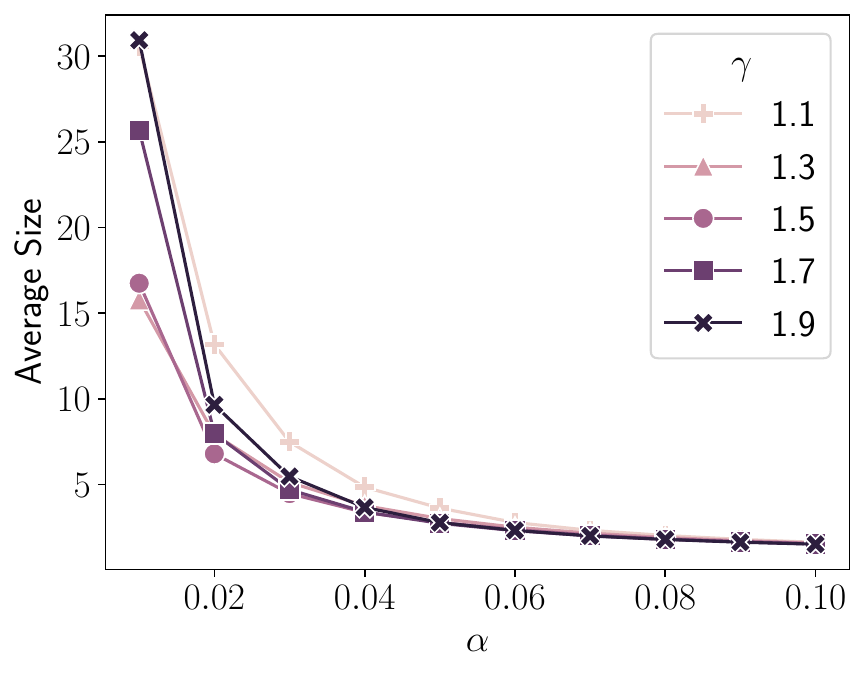}
    \caption{Average set size as a function of $\alpha$, for the ImageNet dataset with varying $\gamma$ for $\entmax$.}
    \label{fig:gamma_entmax}
\end{figure}

\paragraph{Singleton ratio} Considering the ratio of singleton predictions, shown in Figure \ref{fig:singleton_ratio}, we observe significant differences between methods: $\raps$ and $\logmargin$ have the lowest  and highest ratio, respectively, across all tasks and confidence levels; $\invprob$, $\logmargin$ and $\optentmax$ output singletons even for $\alpha=0.01$ for all tasks except for 20 Newsgroups, where no method produces singletons, which could be related to the lower accuracy of the original predictive model. The coverage of singleton predictions, also shown in Figure \ref{fig:singleton_ratio}, is approximately $1-\alpha$, as desired, for all methods.

\begin{figure*}[t]
\centering
\includegraphics[width=0.95\textwidth]{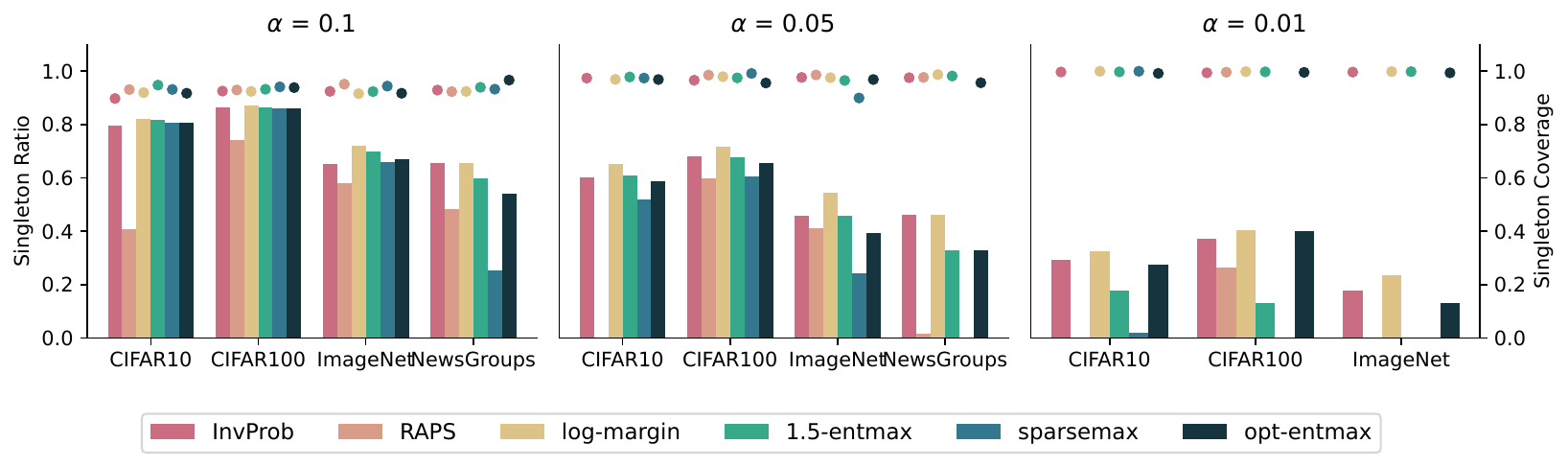}
\caption{Ratio of singleton prediction sets (bars) and their coverage (points) for different values of confidence level $1-\alpha$ (NewsGroups is absent for $\alpha=0.01$ because no singletons were predicted by either method).\label{fig:singleton_ratio}}
\end{figure*}

\subsection{Adaptiveness} Given that conformal predictors are unable to provide \textit{conditional} coverage theoretical guarantees \citep{pmlr-v25-vovk12}, \textit{i.e.}, there is no lower bound on $\mathbb{P}\big( Y_\text{test} \in \mathcal{C}_\alpha(X_\text{test})|X_\text{test} \big)$ without extra assumptions, different measures have been proposed to calculate proximity to conditional coverage. Ideally, for any given partition of the data, we would have a coverage close to the $1-\alpha$ bound. 
\citet{angelopoulos2022uncertaintysetsimageclassifiers} introduced an adaptiveness criterion based on \textit{size-stratified coverage}, by evaluating coverage in a partition based on the size of the predicted sets. Table \ref{tab:imagenet_size_coverage} shows the size-stratified coverage for the ImageNet dataset, with $\alpha=0.01$ and $\alpha=0.1$. For $\alpha=0.1$, $\logmargin$ stands out from other methods, achieving a lower deviation from exact coverage on all set cardinalities. Size-stratified coverage improves for lower values of $\alpha$, and $\invprob$, $\logmargin$ and $\optentmax$ exhibit the desired behavior---achieving expected coverage while still predicting small sets for easier examples. 

\begin{table*}
\centering
\caption{Size-stratified coverage for the ImageNet dataset with $\alpha=0.01$ (top) and $\alpha=0.1$ (bottom).}
\smallskip 

\begin{tabular}{lrrrrrrrrrrrr}
\toprule
\multirow{2}{*}{\textbf{size}} & \multicolumn{2}{c}{\textbf{InvProb}} & \multicolumn{2}{c}{\textbf{RAPS}} & \multicolumn{2}{c}{\textbf{log-margin}} & \multicolumn{2}{c}{\textbf{1.5-entmax}} & \multicolumn{2}{c}{\textbf{sparsemax}} & \multicolumn{2}{c}{\textbf{opt-entmax}} \\
\cmidrule(lr){2-3} \cmidrule(lr){4-5} \cmidrule(lr){6-7} \cmidrule(lr){8-9} \cmidrule(lr){10-11} \cmidrule(lr){12-13}
 & n & cov & n & cov & n & cov & n & cov & n & cov & n & cov \\
\midrule
0 - 1 &    5339 &  0.994 &      0 &   -- &       7088 &  0.992 &         44 &  1.000 &         0 &    -- &       3899 &  0.996 \\
2 - 3 &    7665 &  0.992 &      0 &   -- &       7970 &  0.989 &       1559 &  1.000 &         0 &    -- &       7383 &  0.994 \\
4 - 6 &    5403 &  0.990 &      0 &   -- &       4821 &  0.986 &       5196 &  0.998 &         0 &    -- &     6159 &  0.991 \\
7 -10 &    3221 &  0.990 &      0 &   -- &       2604 &  0.982 &       7585 &  0.997 &       177 &  1.000 &     3814 &  0.992 \\
11 - 1000  &    8372 &  0.981 &  30000 &  0.99 &       7517 &  0.987 &      15616 &  0.982 &     29823 &  0.989 &       8745 &  0.978 \\
\bottomrule
\end{tabular}

\begin{tabular}{lrrrrrrrrrrrrrrr}
\toprule
\multirow{2}{*}{\textbf{size}} & \multicolumn{2}{c}{\textbf{InvProb}} & \multicolumn{2}{c}{\textbf{RAPS}} & \multicolumn{2}{c}{\textbf{log-margin}} & \multicolumn{2}{c}{\textbf{1.5-entmax}} & \multicolumn{2}{c}{\textbf{sparsemax}} & \multicolumn{2}{c}{\textbf{opt-entmax}} \\
\cmidrule(lr){2-3} \cmidrule(lr){4-5} \cmidrule(lr){6-7} \cmidrule(lr){8-9} \cmidrule(lr){10-11} \cmidrule(lr){12-13}
 & n & cov & n & cov & n & cov & n & cov & n & cov & n & cov \\
\midrule
0 - 1   & 19610 & 0.932 & 17408 & 0.948 & 21664 & 0.924 & 20939 & 0.933 & 19820 & 0.942 & 20074 & 0.940 \\
2 - 3   & 9321  & 0.851 & 8862  & 0.888 & 6346  & 0.859 & 7358  & 0.849 & 9098  & 0.838 & 8731  & 0.839 \\
4 - 6   & 917   & 0.808 & 2696  & 0.774 & 1404  & 0.767 & 1407  & 0.728 & 1048  & 0.618 & 1140  & 0.636 \\
7 - 10  & 4     & 0.500 & 753   & 0.616 & 373   & 0.743 & 262   & 0.588 & 34    & 0.6 & 50    & 0.519 \\
11 - 1000 & 0   &  --  & 0     &  --  & 213   & 0.653 & 34    & 0.559 & 0     &  --  & 1     &  0  \\
\bottomrule
\end{tabular}
\label{tab:imagenet_size_coverage}
\end{table*}

\section{\MakeUppercase{Discussion}}
\label{sec:discussion}

\paragraph{Choosing a score} The choice of non-conformity score plays a crucial role in the efficiency of a conformal predictor \citep{aleksandrova21a_impact}; consequently, finding useful model-agnostic scores is key for the adoption of conformal prediction as an actionable uncertainty quantification framework. In \S\ref{sec:proposed}, we introduced a new family of non-conformity measures that we show to be competitive with standard scores on different classification tasks (\S \ref{sect:experiments}). Expanding the range of possible non-conformity measures is particularly relevant, considering that the optimal score choice is known to be task-dependent \citep{laxhammar}. Additionally, we show that these scores can be used to tune the adaptiveness of a conformal predictor.

\paragraph{Temperature scaling connection} 
Conformal prediction and temperature scaling are two popular, yet very distinct, approaches to the task of model calibration and confidence estimation. The fact that there is an equivalence between the two methods for a family of activation functions is not only intrinsically interesting but also promising, as it opens some research threads: designing new non-conformity scores from sparse transformations, or comparing an activation natural calibration capacity through the evaluation of the correspondent conformal predictor.

\section{\MakeUppercase{Conclusion}}
\label{sec:conclusion}
In this paper, we showed a novel connection between conformal prediction and temperature scaling in sparse activation functions. Specifically, we derived  new non-conformity scores which make conformal set prediction equivalent to temperature scaling of the $\entalpha\text{-}\mathsf{entmax}$ family of sparse activations. This connection allows $\entalpha\text{-}\mathsf{entmax}$ with the proposed calibration procedure to inherit the strong coverage guarantees of conformal prediction. Experiments with computer vision and text classification benchmarks show the efficiency and adaptiveness of the proposed non-conformity scores, showing their practical usefulness. 
\section{\MakeUppercase{Acknowledgements}}
\label{sec:acknowledgement}
\rebuttal{
This work was supported by the Portuguese Recovery
and Resilience Plan through project C645008882-
00000055 (NextGenAI - Center for Responsible AI), by the EU's
Horizon Europe Research and Innovation Actions
(UTTER, contract 101070631), by the project
DECOLLAGE (ERC-2022-CoG 101088763), 
and by FCT/MECI through national funds and when applicable co-funded EU funds under UID/50008: Instituto de Telecomunicações. 
}
\bibliography{references}

\bibliographystyle{unsrtnat}

\clearpage

\appendix
\onecolumn

\section{Proof of Proposition~\ref{prop:tau}}\label{sec:proof_tau}

Let $\pi$ be the permutation over $[K]$ such that $z_{\pi(1)} \ge z_{\pi(2)} \ge ... \ge z_{\pi(K)}$ are the entries of $\bm{z}$ sorted in ascending order. 
We have $\entalpha\text{-entmax}_j(\beta\bm{z}) = \left[ (\entalpha - 1)\beta z_j - \tau \right]_{+}^{\frac{1}{\entalpha - 1}}$, where $\tau$ satisfies 
\begin{align}\label{eq:proof_equality}
    \sum_{k = 1}^{|S(\beta \bm{z})|} [(\entalpha-1) \beta z_{\pi(k)} - \tau]^{\frac{1}{\entalpha - 1}} = 1.
\end{align}

We prove the desired equivalence by showing implication in both directions, as follows. 

\begin{itemize}
    \item \framebox{$\pi(j) \in S(\beta \bm{z}) \Longrightarrow \sum_{k=1}^{j-1} \left[ (\entalpha - 1)\beta (z_{\pi(k)} - z_{\pi(j)}) \right]^{\frac{1}{\entalpha - 1}} < 1$}

To prove this direction, note that, if $\pi(j) \in S(\beta \bm{z})$, we must have by \eqref{eq:entmax_def} that $(\entalpha - 1) \beta z_{\pi(j)} > \tau$. 
Therefore, we have that, for all $k \in [K]$, $(\entalpha - 1) \beta (z_{\pi(k)} - z_{\pi(j)}) < (\entalpha - 1) \beta z_{\pi(k)} - \tau$. 
As a consequence, 
\begin{align}
    \sum_{k=1}^{j-1} \left[ (\entalpha - 1)\beta (z_{\pi(k)} - z_{\pi(j)}) \right]^{\frac{1}{\entalpha - 1}} &\le \sum_{k=1}^{|S(\beta\bm{z})|} \left[ (\entalpha - 1)\beta (z_{\pi(k)} - z_{\pi(j)}) \right]^{\frac{1}{\entalpha - 1}} \nonumber\\
    &< \sum_{k=1}^{|S(\beta\bm{z})|} \left[ (\entalpha - 1)\beta z_{\pi(k)} - \tau \right]^{\frac{1}{\entalpha - 1}} = 1,
\end{align}
where the last equality comes from \eqref{eq:proof_equality}.

    \item \framebox{$\pi(j) \in S(\beta \bm{z}) \Longleftarrow \sum_{k=1}^{j-1} \left[ (\entalpha - 1)\beta (z_{\pi(j)} - z_{\pi(k)}) \right]^{\frac{1}{\entalpha - 1}} < 1$}

We show the reverse implication by showing that $\pi(j) \notin S(\beta \bm{z}) \Longrightarrow \sum_{k=1}^{j-1} \left[ (\entalpha - 1)\beta (z_{\pi(j)} - z_{\pi(k)}) \right]^{\frac{1}{\entalpha - 1}} \ge 1$. 
If $\pi(j) \notin S(\beta \bm{z})$, we must have by \eqref{eq:entmax_def} that $(\entalpha - 1) \beta z_{\pi(j)} \le \tau$. 
Therefore, we have that, for all $k \in [K]$, $(\entalpha - 1) \beta (z_{\pi(k)} - z_{\pi(j)}) \ge (\entalpha - 1) \beta z_{\pi(k)} - \tau$. 
As a consequence, 
\begin{align}
    \sum_{k=1}^{j-1} \left[ (\entalpha - 1)\beta (z_{\pi(k)} - z_{\pi(j)}) \right]^{\frac{1}{\entalpha - 1}} &\ge \sum_{k=1}^{|S(\beta\bm{z})|} \left[ (\entalpha - 1)\beta (z_{\pi(k)} - z_{\pi(j)}) \right]^{\frac{1}{\entalpha - 1}} \nonumber\\
    &\ge \sum_{k=1}^{|S(\beta\bm{z})|} \left[ (\entalpha - 1)\beta z_{\pi(k)} - \tau \right]^{\frac{1}{\entalpha - 1}} = 1.
\end{align}

\end{itemize}
\rebuttal{
\section{Additional Experimental Details}
\label{sec:extra_details}

\subsection{$\mathsf{RAPS}$}
\label{subsec:raps}
The $\mathsf{RAPS}$ procedure was implemented according to the original protocol introduced by \citet{angelopoulos2022uncertaintysetsimageclassifiers} by adapting the \href{https://github.com/aangelopoulos/conformal-prediction/blob/main/notebooks/imagenet-raps.ipynb}{code} provided by those authors. The method uses two hyperparameters to regularize the original \textit{adaptive prediction sets} procedure \citep{romano2020classificationvalidadaptivecoverage}: $\lambda_\text{reg}$ --- a regularization penalty added to discourage inclusion in the prediction set, and $k_\text{reg}$ --- the order from which classes receive the penalty term. We split the original calibration data into two sets: calibration data ($60\%$) and hyperparameter tuning data ($40\%$). For each split of the dataset, we find the pair of hyperparameters that minimizes the average prediction set size on hyperparameter tuning set, with $\lambda_\text{reg}\in \{0.001,0.01,0.1,1\}$ and $k_\text{reg} \in \{1,5,10,50\}$.

\subsection{$\mathsf{opt\text{-}entmax}$}
\label{subsec:opt_entmax}

The $\mathsf{opt\text{-}entmax}$ procedure splits the calibration data into two sets: calibration data ($60\%$) to perform conformal prediction with $\entalpha\text{-}\mathsf{entmax}$ for varying values of $\gamma\in\{1.1,1.2,1.3,1.4,1.5,1.6,1.7,1.8,1.9\}$ ; and a hyperparameter tuning set ($40\%$) where the average set size is measured and from which the optimal $\gamma$ is chosen.
}
\section{Experimental Results}
\label{sec:extra_results}

\subsection{Model Accuracy}\label{subsec:modelling}
All models were evaluated on each of the 5 calibration-test set splits. The average and standard deviation of model accuracy over splits can be found in Table \ref{tab:accuracies}, along with the calibration and set sizes for each dataset.

\begin{table}[]
    \centering
\caption{Accuracy of each trained model: calibration and test set sizes, average and standard deviation of accuracy over the 5 different splits.}
\smallskip
\def\arraystretch{1.2}
\begin{tabular}{lllllll}
\cline{2-7}
           & \multicolumn{3}{c}{\textbf{Test}} & \multicolumn{3}{c}{\textbf{Calibration}} \\ \cline{2-7} 
 &
  \multicolumn{1}{c}{\multirow{2}{*}{\textbf{Size}}} &
  \multicolumn{2}{c}{\textbf{Accuracy}} &
  \multicolumn{1}{c}{\multirow{2}{*}{\textbf{Size}}} &
  \multicolumn{2}{c}{\textbf{Accuracy}} \\ \cline{3-4} \cline{6-7} 
 &
  \multicolumn{1}{c}{} &
  \multicolumn{1}{c}{avg} &
  \multicolumn{1}{c}{std} &
  \multicolumn{1}{c}{} &
  \multicolumn{1}{c}{avg} &
  \multicolumn{1}{c}{std} \\ \hline
CIFAR10    & 6000         & 0.838        & 0.005       & 4000      & 0.841     & 0.003     \\
CIFAR100   & 6000         & 0.860        & 0.003       & 4000      & 0.858     & 0.002     \\
ImageNet   & 30000        & 0.805        & 0.001       & 20000     & 0.805     & 0.001     \\
NewsGroups & 2261         & 0.744        & 0.006       & 1508      & 0.742     & 0.004     \\ \hline
\end{tabular}
\label{tab:accuracies}
\end{table}

\subsection{Coverage}\label{subsec:extra_coverage}

The coverage results of all methods over the 5 splits of calibration is shown in Figure \ref{fig:coverage_all}.
\begin{figure}[]
    \centering
    \includegraphics[width=0.9\linewidth]{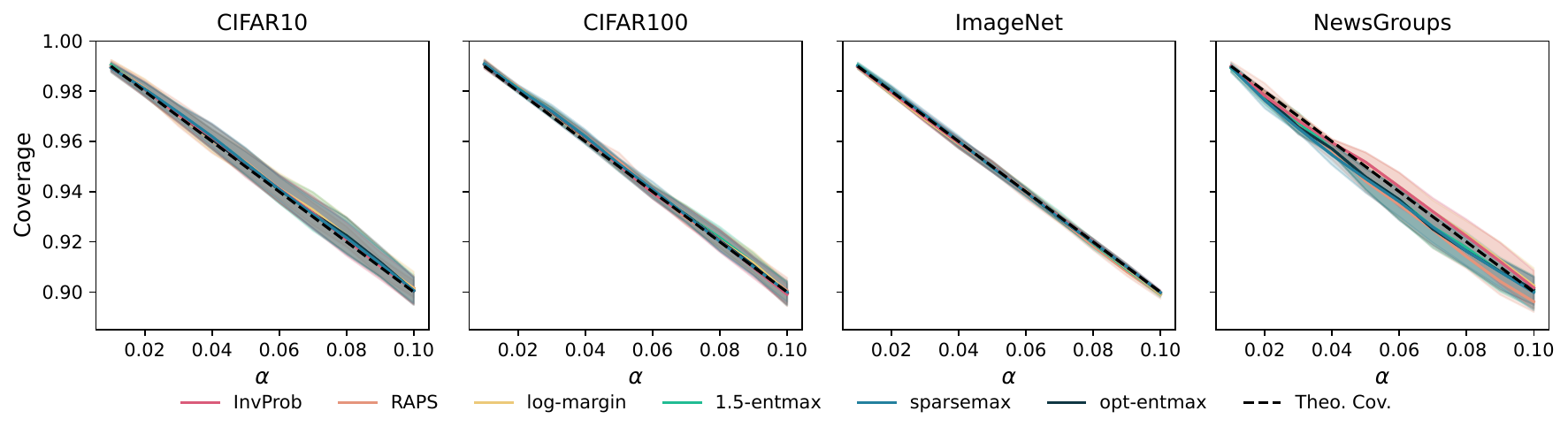}
    \caption{Coverage on the test set as a function of $\alpha$, over the 5 splits of calibration and test data.}
    \label{fig:coverage_all}
\end{figure}

\rebuttal{
\subsection{Analysis of $\mathsf{opt\text{-}entmax}$}
\label{subsec:all_optentmax_gamma}

Figure \ref{fig:fixed_alpha_lambdas} shows how varying the value of $\gamma$ for the non-conformity score affects the average prediction set size for a fixed value of $\alpha$. It is clear that the optimal value of $\gamma$ depends on both the task and the confidence level.

\begin{figure}[]
    \centering
    \includegraphics[width=0.9\linewidth]{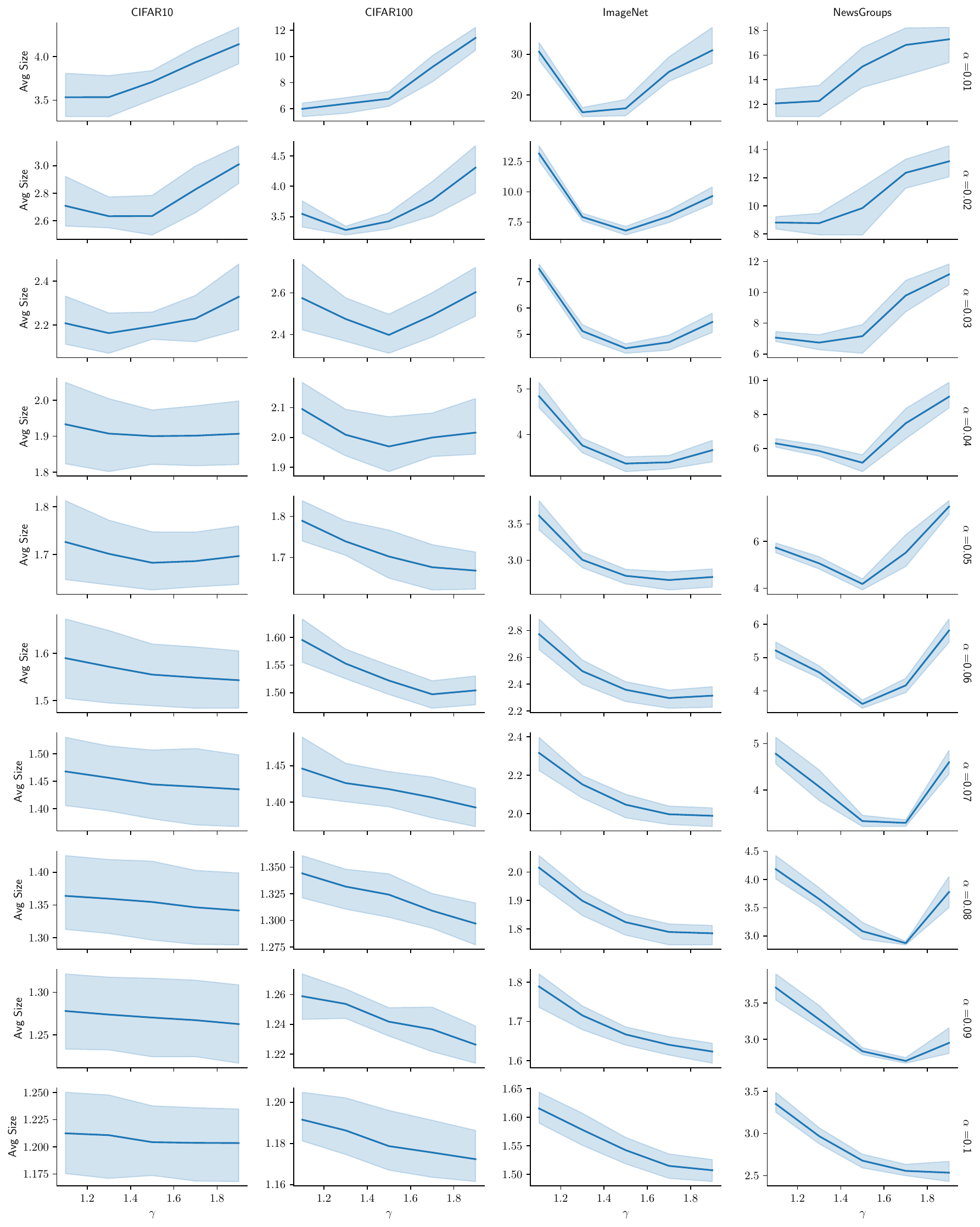}
    \caption{Average set size varying with $\gamma$ parameter, on separate calibration set for different datasets and values of $\alpha$.}
    \label{fig:fixed_alpha_lambdas}
\end{figure}
}
\subsection{Adaptiveness}\label{subsec:extra_apdaptiveness}

In Figure \ref{fig:sscv} we can see the size-stratified coverage violation (SSCV) --- introduced by \citet{angelopoulos2022uncertaintysetsimageclassifiers}, it measures the maximum deviation from the desired coverage $1-\alpha$. Partitioning the possible size cardinalities into $G$ bins, $B_1, ..., B_G$, let $\mathcal{I}_g$ be the set of observations falling in bin $g$, with $g = 1,...,G$, the SSCV of a predictor $C_\alpha$, for that bin partition is given by:

\begin{equation}
    \text{SSCV}(C, \{B_1,...,B_G\}) = \sup_g \left| \frac{\left\{i : Y_i \in \mathcal{C}_\alpha(X_i), i \in \mathcal{I}_g \right\}}{|\mathcal{J}_j|} - (1 - \alpha) \right|
\label{eq:sscv}
\end{equation}
\begin{figure}[]
    \centering
    \includegraphics[width=0.8\linewidth]{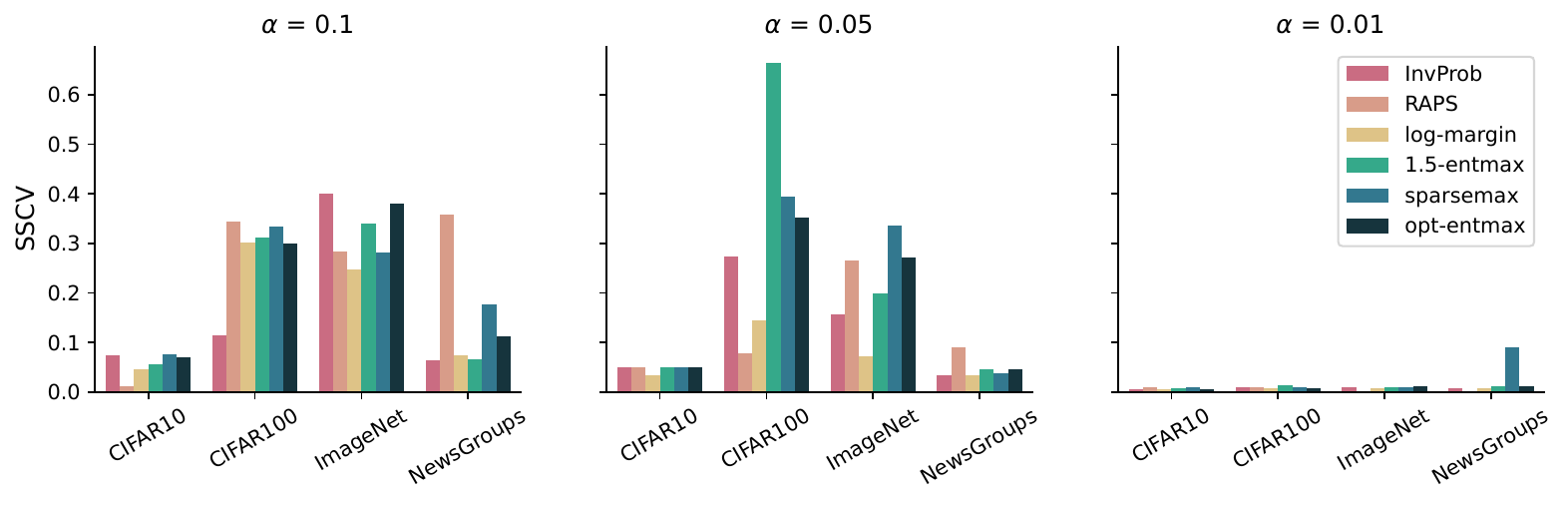}
    \caption{Size-stratified coverage violation (SSCV) for all methods and datasets, for $\alpha \in \{0.01,0.05,0.1\}$}
    \label{fig:sscv}
\end{figure}

Size-stratified coverage for the ImageNet dataset ($\alpha=0.05$) can be found in Table \ref{tab:imagenet_cov_005}. An equivalent analysis for datasets CIFAR100 and 20 NewsGroups can be found in Tables \ref{tab:coverage_cifar100} and \ref{tab:coverage_newsgroups}, respectively, for different values of $\alpha$.

\begin{table*}[]
\centering
\caption{Size-stratified coverage for the ImageNet dataset with $\alpha=0.05$.}
\smallskip 

\begin{tabular}{lrrrrrrrrrrrr}
\toprule
\multirow{2}{*}{\textbf{size}} & \multicolumn{2}{c}{\textbf{InvProb}} & \multicolumn{2}{c}{\textbf{RAPS}} & \multicolumn{2}{c}{\textbf{log-margin}} & \multicolumn{2}{c}{\textbf{1.5-entmax}} & \multicolumn{2}{c}{\textbf{sparsemax}} & \multicolumn{2}{c}{\textbf{opt-entmax}} \\
\cmidrule(lr){2-3} \cmidrule(lr){4-5} \cmidrule(lr){6-7} \cmidrule(lr){8-9} \cmidrule(lr){10-11} \cmidrule(lr){12-13}
 & n & cov & n & cov & n & cov & n & cov & n & cov & n & cov \\
        \midrule
        0 - 1  & 13745 & 0.975 & 12321 & 0.979 & 16286 & 0.966 & 13730 & 0.975 & 7260  & 0.992 & 11805 & 0.982 \\
        2 - 3  &  9579 & 0.958 &  4774 & 0.977 &  8144 & 0.945 & 10132 & 0.956 & 15693 & 0.968 & 11788 & 0.959 \\
        4 - 6  &  4189 & 0.903 &  8654 & 0.959 &  2653 & 0.912 &  3671 & 0.905 &  5681 & 0.885 &  4397 & 0.898 \\
        7 - 10 &  2046 & 0.835 &  3494 & 0.865 &  1244 & 0.916 &  1542 & 0.858 &  1226 & 0.742 &  1510 & 0.819 \\
        11 - 1000 &   440 & 0.793 &   570 & 0.684 &  1673 & 0.877 &   925 & 0.751 &   140 & 0.614 &   500 & 0.678 \\
        \bottomrule
\end{tabular}
\label{tab:imagenet_cov_005}
\end{table*}

\begin{table}
\caption{Size-stratified coverage for the CIFAR100 dataset with $\alpha=0.01$ (top), $\alpha=0.05$ (middle) and $\alpha=0.1$ (bottom).}
\centering
\begin{tabular}{lrrrrrrrrrrrr}
\toprule
\multirow{2}{*}{\textbf{size}} & \multicolumn{2}{c}{\textbf{InvProb}} & \multicolumn{2}{c}{\textbf{RAPS}} & \multicolumn{2}{c}{\textbf{log-margin}} & \multicolumn{2}{c}{\textbf{1.5-entmax}} & \multicolumn{2}{c}{\textbf{sparsemax}} & \multicolumn{2}{c}{\textbf{opt-entmax}} \\
\cmidrule(lr){2-3} \cmidrule(lr){4-5} \cmidrule(lr){6-7} \cmidrule(lr){8-9} \cmidrule(lr){10-11} \cmidrule(lr){12-13}
 & n & cov & n & cov & n & cov & n & cov & n & cov & n & cov \\
\midrule
        0 - 1  & 2222 & 0.999 & 1577 & 0.999 & 2422 & 0.998 &  795 & 0.999 &    0 & NaN   & 2397 & 0.998 \\
        2 - 3  & 1476 & 0.995 & 1015 & 0.998 & 1437 & 0.993 & 1464 & 0.999 &   12 & 1.000 & 1457 & 0.993 \\
        4 - 6 &  810 & 0.998 &  567 & 1.000 &  750 & 0.996 & 1416 & 1.000 &  190 & 1.000 &  762 & 0.997 \\
        7 - 10 &  457 & 0.991 &  898 & 0.998 &  414 & 0.990 &  977 & 0.993 & 1410 & 0.999 &  424 & 0.988 \\
        11 - 100 & 1035 & 0.980 & 1943 & 0.984 &  977 & 0.986 & 1348 & 0.976 & 4388 & 0.991 &  960 & 0.984 \\
        \bottomrule
\end{tabular}

\begin{tabular}{lrrrrrrrrrrrr}
\toprule
\multirow{2}{*}{\textbf{size}} & \multicolumn{2}{c}{\textbf{InvProb}} & \multicolumn{2}{c}{\textbf{RAPS}} & \multicolumn{2}{c}{\textbf{log-margin}} & \multicolumn{2}{c}{\textbf{1.5-entmax}} & \multicolumn{2}{c}{\textbf{sparsemax}} & \multicolumn{2}{c}{\textbf{opt-entmax}} \\
\cmidrule(lr){2-3} \cmidrule(lr){4-5} \cmidrule(lr){6-7} \cmidrule(lr){8-9} \cmidrule(lr){10-11} \cmidrule(lr){12-13}
 & n & cov & n & cov & n & cov & n & cov & n & cov & n & cov \\
        \midrule
        0 - 1 & 4089 & 0.976 & 3587 & 0.986 & 4312 & 0.970 & 4065 & 0.977 & 3634 & 0.987 & 3934 & 0.980 \\
        2 - 3  & 1357 & 0.917 & 1055 & 0.967 & 1139 & 0.913 & 1391 & 0.918 & 1954 & 0.916 & 1569 & 0.917 \\
        4 - 6  &  453 & 0.841 & 1316 & 0.872 &  346 & 0.867 &  423 & 0.830 &  393 & 0.756 &  432 & 0.806 \\
        7 - 10 &   99 & 0.677 &    0 & --   &  121 & 0.826 &  114 & 0.728 &   18 & 0.556 &   62 & 0.597 \\
        11 - 100  &    2 & 1.000 &    0 & --   &   82 & 0.805 &    7 & 0.286 &    1 & 1.000 &    3 & 0.667 \\
        \bottomrule
\end{tabular}
\begin{tabular}{lrrrrrrrrrrrr}
\toprule
\multirow{2}{*}{\textbf{size}} & \multicolumn{2}{c}{\textbf{InvProb}} & \multicolumn{2}{c}{\textbf{RAPS}} & \multicolumn{2}{c}{\textbf{log-margin}} & \multicolumn{2}{c}{\textbf{1.5-entmax}} & \multicolumn{2}{c}{\textbf{sparsemax}} & \multicolumn{2}{c}{\textbf{opt-entmax}} \\
\cmidrule(lr){2-3} \cmidrule(lr){4-5} \cmidrule(lr){6-7} \cmidrule(lr){8-9} \cmidrule(lr){10-11} \cmidrule(lr){12-13}
 & n & cov & n & cov & n & cov & n & cov & n & cov & n & cov \\
      \midrule
        0 - 1  & 5194 & 0.916 & 4456 & 0.952 & 5225 & 0.920 & 5182 & 0.924 & 5170 & 0.925 & 5170 & 0.925 \\
        2 - 3  &  786 & 0.785 & 1240 & 0.819 &  683 & 0.779 &  752 & 0.769 &  807 & 0.742 &  804 & 0.749 \\
        4 - 6 &    1 & 1.000 &  153 & 0.556 &   82 & 0.598 &   63 & 0.587 &   23 & 0.565 &   25 & 0.600 \\
        7 - 10 &    0 & --   &    0 & --   &    9 & 0.667 &    3 & 0.667 &    0 & --   &    1 & 0.000 \\
        11 - 100 &    0 & --   &    0 & --   &    1 & 1.000 &    0 & --   &    0 & --   &    0 & --   \\
        \bottomrule
\end{tabular}
\label{tab:coverage_cifar100}
\end{table}

\begin{table}
\caption{Size-stratified coverage for the 20 Newsgroups dataset with $\alpha=0.01$ (top), $\alpha=0.05$ (middle) and $\alpha=0.1$ (bottom).}
\centering
\begin{tabular}{lrrrrrrrrrrrr}
\toprule
\multirow{2}{*}{\textbf{size}} & \multicolumn{2}{c}{\textbf{InvProb}} & \multicolumn{2}{c}{\textbf{RAPS}} & \multicolumn{2}{c}{\textbf{log-margin}} & \multicolumn{2}{c}{\textbf{1.5-entmax}} & \multicolumn{2}{c}{\textbf{sparsemax}} & \multicolumn{2}{c}{\textbf{opt-entmax}} \\
\cmidrule(lr){2-3} \cmidrule(lr){4-5} \cmidrule(lr){6-7} \cmidrule(lr){8-9} \cmidrule(lr){10-11} \cmidrule(lr){12-13}
 & n & cov & n & cov & n & cov & n & cov & n & cov & n & cov \\
        \midrule
        0 - 1  &    0 & --   &    0 & --   &    0 & --   &    0 & --   &    0 & --   &    0 & --   \\
        2 - 3  &   77 & 0.987 &    0 & --   &   78 & 0.987 &    0 & --   &    0 & --   &   13 & 1.000 \\
        4 - 6  &  338 & 0.982 &    0 & --   &  345 & 0.983 &    0 & --   &    0 & --   &  279 & 0.978 \\
        7 - 10 &  718 & 0.986 &    0 & --   &  720 & 0.986 &   45 & 0.978 &   10 & 0.900 &  704 & 0.989 \\
        11 - 20  & 1128 & 0.994 & 2261 & 0.988 & 1118 & 0.994 & 2216 & 0.990 & 2251 & 0.988 & 1265 & 0.992 \\
        \bottomrule
\end{tabular}

\begin{tabular}{lrrrrrrrrrrrr}
\toprule
\multirow{2}{*}{\textbf{size}} & \multicolumn{2}{c}{\textbf{InvProb}} & \multicolumn{2}{c}{\textbf{RAPS}} & \multicolumn{2}{c}{\textbf{log-margin}} & \multicolumn{2}{c}{\textbf{1.5-entmax}} & \multicolumn{2}{c}{\textbf{sparsemax}} & \multicolumn{2}{c}{\textbf{opt-entmax}} \\
\cmidrule(lr){2-3} \cmidrule(lr){4-5} \cmidrule(lr){6-7} \cmidrule(lr){8-9} \cmidrule(lr){10-11} \cmidrule(lr){12-13}
 & n & cov & n & cov & n & cov & n & cov & n & cov & n & cov \\
        \midrule
        0 - 1  & 1045 & 0.957 &   40 & 0.900 & 1047 & 0.956 &  745 & 0.969 &    0 & NaN   &  745 & 0.969 \\
       2 - 3  &  413 & 0.935 &  451 & 0.973 &  411 & 0.937 &  660 & 0.947 &  125 & 0.944 &  660 & 0.947 \\
        4 - 6   &  166 & 0.946 & 1441 & 0.948 &  167 & 0.946 &  393 & 0.954 &  588 & 0.912 &  393 & 0.954 \\
        7 - 10  &   97 & 0.948 &  213 & 0.859 &   95 & 0.947 &  229 & 0.904 & 1126 & 0.952 &  229 & 0.904 \\
        11 - 20 &  540 & 0.983 &  116 & 0.957 &  541 & 0.983 &  234 & 0.923 &  422 & 0.981 &  234 & 0.923 \\
        \bottomrule
\end{tabular}
\begin{tabular}{lrrrrrrrrrrrr}
\toprule
\multirow{2}{*}{\textbf{size}} & \multicolumn{2}{c}{\textbf{InvProb}} & \multicolumn{2}{c}{\textbf{RAPS}} & \multicolumn{2}{c}{\textbf{log-margin}} & \multicolumn{2}{c}{\textbf{1.5-entmax}} & \multicolumn{2}{c}{\textbf{sparsemax}} & \multicolumn{2}{c}{\textbf{opt-entmax}} \\
\cmidrule(lr){2-3} \cmidrule(lr){4-5} \cmidrule(lr){6-7} \cmidrule(lr){8-9} \cmidrule(lr){10-11} \cmidrule(lr){12-13}
 & n & cov & n & cov & n & cov & n & cov & n & cov & n & cov \\

        \midrule
        0 - 1  & 1481 & 0.918 & 1095 & 0.944 & 1481 & 0.918 & 1356 & 0.933 &  575 & 0.967 & 1226 & 0.939 \\
        2 - 3   &  320 & 0.856 &  311 & 0.945 &  312 & 0.853 &  471 & 0.885 & 1336 & 0.912 &  642 & 0.894 \\
        4 - 6  &  121 & 0.868 &  794 & 0.851 &  113 & 0.867 &  208 & 0.837 &  263 & 0.722 &  234 & 0.786 \\
        7 - 10 &   92 & 0.946 &   61 & 0.541 &   77 & 0.961 &  102 & 0.833 &   27 & 0.778 &   92 & 0.793 \\
        11 - 20 &  247 & 0.964 &    0 & --   &  278 & 0.975 &  124 & 0.944 &   60 & 0.883 &   67 & 0.985 \\
        \bottomrule
\end{tabular}
\label{tab:coverage_newsgroups}
\end{table}

\end{document}